\PassOptionsToPackage{table}{xcolor}

\documentclass[sigconf,nonacm]{acmart}
\setcopyright{none}   %
\usepackage{amsmath,bm,mathtools}
\usepackage{booktabs,multirow,array}
\usepackage{graphicx}
\usepackage{xurl}
\usepackage{enumitem}
\usepackage{xspace}
\usepackage{amssymb}
\usepackage[capitalise,noabbrev]{cleveref}   %

\definecolor{tabfirst}{rgb}{1, 0.7, 0.7}
\definecolor{tabsecond}{rgb}{1, 0.85, 0.7}
\definecolor{tabthird}{rgb}{1, 1, 0.7}

\newcommand{\MHeatmap}{\textsc{MHeatmap}\xspace}
\newcommand{\TW}{\textsc{Two-Walk}\xspace}
\newcommand{\OW}{\textsc{One-Walk}\xspace}
\newcommand{\Mosaic}{\textsc{Mosaic}\xspace}
\newcommand{\B}{\bm{B}}
\newcommand{\A}{\bm{A}}
\newcommand{\M}{\bm{M}}
\newcommand{\Lap}{\bm{L}}
\newcommand{\Atw}{\bm{A}_{\alpha}}
\newcommand{\Ltw}{\bm{L}_{\alpha}}
\newcommand{\one}{\bm{1}}
\newcommand{\R}{\mathbb{R}}

\theoremstyle{definition}
\newtheorem{definition}{Definition}
\theoremstyle{plain}
\newtheorem{theorem}{Theorem}
\newtheorem{proposition}{Proposition}
\newtheorem{lemma}{Lemma}
\theoremstyle{remark}

\graphicspath{{figures/}}

\title{MHeatmap: Two-Walk Laplacian Reordering and Mosaic Layout for Rectangular Matrix Visualization}

\author{Juntang Wang}
\affiliation{%
  \institution{Harvard University}
  \city{Cambridge}\country{USA}}
\email{juntangwang@g.harvard.edu}

\author{Sheng'en Li}
\affiliation{%
  \institution{Duke Kunshan University}
  \city{Kunshan}\country{China}}

\author{Liming Zhu}
\affiliation{%
  \institution{Duke Kunshan University}
  \city{Kunshan}\country{China}}
\email{liming.zhu@dukekunshan.edu.cn}

\author{Yihan Wang}
\affiliation{%
  \institution{Duke University}
  \city{Durham}\country{USA}}
\email{yihan.wang@dukekunshan.edu.cn}

\author{Siyuan Cao}
\affiliation{%
  \institution{Duke Kunshan University}
  \city{Kunshan}\country{China}}
\email{siyuan.cao@dukekunshan.edu.cn}

\author{Hao Wu}
\affiliation{%
  \institution{Duke Kunshan University}
  \city{Kunshan}\country{China}}
\email{hw412@duke.edu}

\author{Man Zhang}
\affiliation{%
  \institution{Westlake University}
  \city{Hangzhou}\country{China}}
\email{man.zhang@westlake.edu.cn}

\author{Shixin Xu}
\affiliation{%
  \institution{Duke Kunshan University}
  \city{Kunshan}\country{China}}
\email{shixin.xu@duke.edu}

\author{Dongmian Zou}
\authornote{Corresponding author.}
\affiliation{%
  \institution{Duke Kunshan University}
  \city{Kunshan}\country{China}}
\email{dongmian.zou@duke.edu}

\renewcommand{\shortauthors}{Juntang Wang et al.}

\begin{abstract}
Weighted rectangular matrices record relations between two types of objects, such as occupations and industries, documents and terms, origins and destinations, or annotated classes and computed clusters. Their heatmaps become difficult to read when arbitrary row and column orders scatter related objects, while equal size cells hide large differences in row and column totals. We present \textbf{\MHeatmap} (\textsc{\textbf{M}osaic \textbf{Heatmap}}), which addresses these problems in two stages. \TW spectral seriation first represents the matrix as a weighted bipartite graph. Ordinary bipartite ordering uses only observed row to column links. \TW also uses length-two walks, so rows that share columns and columns that share rows receive explicit similarity links. Sorting a Fiedler vector of the resulting graph Laplacian gives coordinated row and column orders. The \Mosaic layout then partitions a unit square so that row height is proportional to the row total and column width is proportional to the column total, while color represents the observed entry. We prove how the \TW energy combines direct alignment with shared-neighbor smoothness, derive an exact matrix vector product that permits sparse computation, and formalize the tiling and area semantics of \Mosaic. Across 300 synthetic matrices and seven real data sets, \TW obtains the lowest normalized 2-SUM in all 15 synthetic regimes and the highest Band@10\% in 14. The largest gains occur when shared neighborhoods express the latent relational structure.
\end{abstract}

\keywords{matrix visualization, matrix reordering, spectral reordering, mosaic heatmap, visual pattern recovery, bipartite graphs, marginal distribution}

\begin{document}

\maketitle

\section{Introduction}\label{sec:intro}
A weighted rectangular matrix $\B\in\R_{\ge 0}^{m\times n}$ records relations between two different entity types. Rows represent one type, columns represent the other, and $B_{ij}$ gives the nonnegative strength or frequency of their relation. Examples include education by occupation crosswalks, author by topic associations, document by term data, home by work commuting flows, and cell type by cluster contingency tables. Analysts inspect these matrices to identify correspondences, overlapping groups, split and merge relations, and mass that leaks outside expected blocks.

The input order is often alphabetical, administrative, or otherwise unrelated to relational structure. We use the term \emph{rectangular matrix seriation} for the task of permuting rows and columns so that related entities occupy nearby positions while both axes remain separately interpretable. A useful seriation can turn a scattered view into blocks or bands that are easier to inspect. Figure~\ref{fig:teaser} illustrates this effect on an OpenAlex author by topic matrix.

\begin{figure*}[t]
  \centering
  \includegraphics[width=\textwidth]{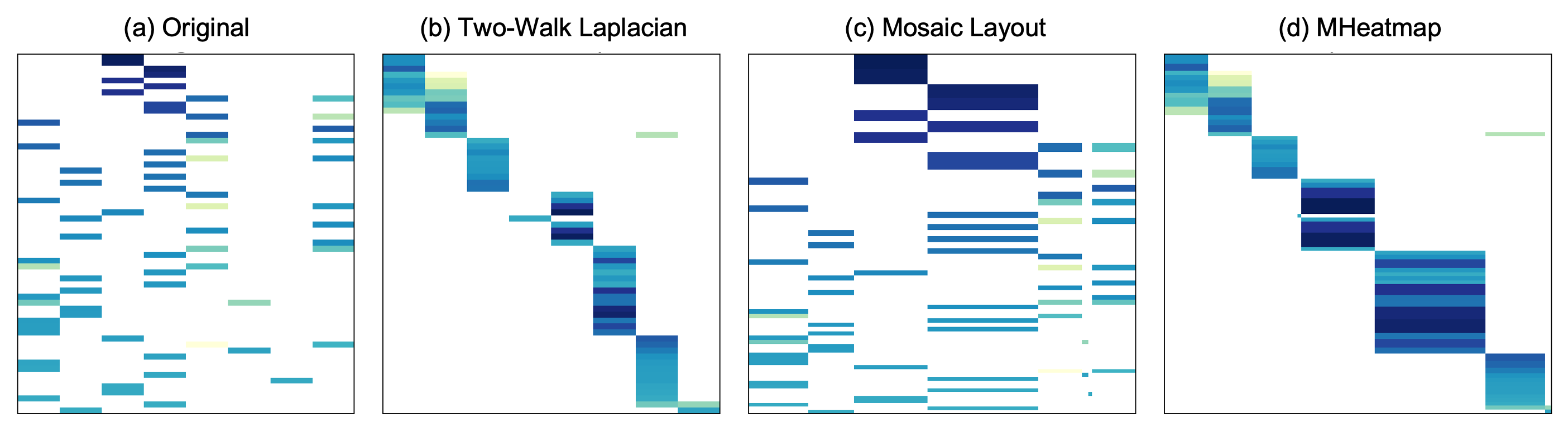}
  \caption{The \MHeatmap workflow on an OpenAlex author by topic matrix. Panel (a) uses the input order and equal size cells. Panel (b) applies \TW seriation and keeps equal size cells, which isolates the ordering effect. Panel (c) applies the \Mosaic layout to the input order, which isolates the geometry effect. Panel (d) combines \TW seriation and \Mosaic rendering.}
  \Description{Four views of one rectangular matrix. The original view is scattered. The Two-Walk view has diagonal blocks. The Mosaic view changes row heights and column widths. The full workflow combines both changes.}
  \label{fig:teaser}
\end{figure*}

A natural representation of $\B$ is a \emph{weighted bipartite graph}, which is a graph with one vertex set for rows, one vertex set for columns, and weighted edges only between the two sets. A standard spectral method places both sets in one symmetric adjacency matrix, forms the graph Laplacian $\Lap=\bm D-\A$, and sorts a \emph{Fiedler vector}. The graph Laplacian is the degree matrix minus the adjacency matrix. A Fiedler vector is an eigenvector associated with its smallest positive eigenvalue, and its entries provide a one dimensional coordinate for ordering. We call this direct construction \OW because its adjacency explicitly contains only direct row to column edges.

Direct edges do not explicitly state that two rows may connect to similar columns or that two columns may receive mass from similar rows. A \emph{length-two walk} follows two consecutive edges. In a bipartite graph, a length-two walk starts and ends on the same side, either row to column to row or column to row to column. These walks make shared neighborhoods explicit. We introduce \TW spectral seriation, which combines direct relations with the row similarity $\B\B^{\top}$ and the column similarity $\B^{\top}\B$ in one Laplacian. The resulting Fiedler vector gives coordinated row and column coordinates.

The second stage is the \Mosaic layout, a proportional heatmap that partitions a unit square into adjacent rectangles. A \emph{row marginal} is a row sum, and a \emph{column marginal} is a column sum. \Mosaic assigns row height in proportion to the row marginal and column width in proportion to the column marginal, while color encodes the observed entry. The rectangles cover the display as a complete tiling, which motivates the name Mosaic. For a contingency table, the area of a cell is the product of its row and column proportions. This is also the expected cell proportion when the two variables are independent. Geometry therefore shows the marginal baseline, while color shows the observed relation. This construction belongs to the historical family of mosaic and sieve displays, which Section~\ref{sec:related} discusses in detail.

The main contribution of this paper is the seriation method. The \Mosaic stage is part of the complete \MHeatmap workflow and is defined precisely so that its visual semantics are reproducible. Our contributions are:
\begin{itemize}[leftmargin=*,itemsep=1pt,topsep=2pt]
  \item We define \TW spectral seriation for nonnegative rectangular matrices. It combines direct cross-partition relations with same-side affinities induced by shared neighbors.
  \item We provide a Laplacian energy decomposition, a scale invariance result, a limit relation to \OW, and an exact sparse matrix vector product that avoids explicit Gram matrices.
  \item We evaluate the method on 300 controlled synthetic matrices and seven real relation matrices. The \OW comparison isolates the contribution of the length-two terms.
  \item We define the \Mosaic layout used by \MHeatmap and prove its tiling and area semantics. We also connect it to established mosaic and sieve displays.
\end{itemize}

\section{Related Work}
\label{sec:related}
\subsection{Matrix Seriation and Reordering}
Seriation arranges objects along an axis so that structurally related objects become nearby~\cite{liiv2010seriation}. Spectral seriation sorts a Fiedler vector and can be viewed as a relaxation of quadratic arrangement objectives~\cite{fiedler1973algebraic,atkins1998spectral,barnard1995spectral,fogel2015convex}. Other approaches include hierarchical clustering with optimal leaf ordering~\cite{barjoseph2001optimal}, bandwidth and profile reduction for sparse matrices~\cite{reid2006reducing}, and alternating procedures motivated by weighted bipartite layouts~\cite{cakiroglu2009weightedcrossing}. Recent visualization work evaluates whether an ordering reveals blocks, bands, and related motifs rather than judging only one numerical objective~\cite{behrisch2017magnostics,vanbeusekom2022reorder,zhu2025reorderbench}. Most classical formulations assume a square similarity matrix or order the two axes independently. We study coordinated row and column orders for a rectangular relation matrix.

\subsection{Bipartite Spectral Methods and Shared Neighborhoods}
Spectral co-clustering represents a rectangular matrix as a bipartite graph and uses coupled row and column coordinates to obtain discrete groups~\cite{dhillon2001coclustering}. Correspondence analysis and singular value methods also provide paired coordinates~\cite{greenacre2017correspondence}. Our task is seriation, which returns continuous orders rather than only clusters. The plain bipartite adjacency contains direct cross-partition edges but no explicit same-side edges.

A shared-neighbor relation is a similarity induced by common adjacent vertices. Such relations are standard in graph similarity and link analysis~\cite{blondel2004similarity,libennowell2007link}. For a bipartite adjacency $\M$, the square $\M^2$ records length-two walks and is block diagonal. It therefore captures row similarity and column similarity but separates the two vertex sets. \TW uses $\M^2+\alpha\M$ so that shared neighborhoods and direct relations contribute to one spectral coordinate. The contribution is this joint seriation formulation and its analysis, not the general observation that graph walks encode proximity.

\subsection{Mosaic, Sieve, and Product Plots}
The term mosaic has an established history in statistical graphics. Hartigan and Kleiner used it for a collection of rectangles whose areas represent contingency table counts~\cite{hartigan1981mosaics}. Friendly developed classical mosaic displays for multiway tables~\cite{friendly1994mosaic}. A related parquet or sieve diagram sets row heights and column widths from the corresponding marginals, so each rectangle has area proportional to the expected count under independence~\cite{riedwyl1994parquet,meyer2006strucplot}. Product plots provide a general framework in which area is the product of width and height and include mosaic, treemap, and equal area displays~\cite{wickham2011product}.

This lineage explains the name \Mosaic in \MHeatmap. Its geometry follows the marginal partition of a sieve diagram, while its color channel follows a heatmap by encoding the observed entry. The result is related to, but not identical with, a classical mosaic display. The general idea of a proportional rectangular tiling is established prior work. Our method-specific contribution is the precise combination of marginal geometry and heatmap color inside the \MHeatmap workflow. Marginal bars provide another direct way to show row and column totals. Equal area heatmaps, marginal bars, and \Mosaic support different reading tasks. Since area judgments are less precise than aligned position and length judgments~\cite{cleveland1984graphical}, our claims for \Mosaic concern its encoding semantics and qualitative use, not universal perceptual superiority.

\section{Method}
\label{sec:method}
\subsection{Preliminaries and One-Walk Seriation}
Let $\B\in\R_{\ge0}^{m\times n}$ be nonzero.

\begin{definition}[Active matrix]
The active matrix of $\B$ is obtained by deleting every row and column whose marginal is zero. These deleted indices have no incident graph edge and therefore no informative spectral coordinate.
\end{definition}

We state the spectral results for an active matrix whose bipartite support graph is connected. The \emph{support graph} keeps an edge exactly where the corresponding matrix entry is positive. Section~\ref{sec:implementation} gives a deterministic extension to disconnected support.

For any symmetric nonnegative adjacency $\A$, let $\bm D=\operatorname{diag}(\A\one)$ and $\Lap=\bm D-\A$. When the graph is connected, a unit Fiedler vector $\bm f$ satisfies the variational characterization
\begin{equation}
\bm f\in\arg\min_{\bm z^{\top}\one=0,\ \|\bm z\|_2=1}\bm z^{\top}\Lap\bm z.
\label{eq:rayleigh}
\end{equation}
Thus strongly connected vertices are encouraged to receive nearby coordinates.

\begin{definition}[Rectangular matrix seriation]
A rectangular seriation of $\B$ is a pair of permutations $(\pi_R,\pi_C)$ of its row and column indices. The permutations are coordinated when both are derived from one model of the row and column relations.
\end{definition}

Our implementation normalizes the active matrix by its largest entry. We therefore define
\begin{equation}
\bar{\B}=\frac{\B}{s(\B)},\qquad s(\B)=\max_{i,j}B_{ij}>0.
\label{eq:scale}
\end{equation}
This normalization is necessary because the two-walk term is quadratic in $\B$ and the direct term is linear.

\begin{definition}[Bipartite lift and One-Walk seriation]
The bipartite lift of $\bar{\B}$ is
\begin{equation}
\M=\begin{bmatrix}
\bm 0 & \bar{\B}\\
\bar{\B}^{\top} & \bm 0
\end{bmatrix}.
\label{eq:lift}
\end{equation}
Let $\bm L_{\mathrm{ow}}$ be the Laplacian of $\M$. Split a Fiedler vector as $[\bm x^{\top},\bm y^{\top}]^{\top}$, where $\bm x\in\R^m$ and $\bm y\in\R^n$. Sorting $\bm x$ and $\bm y$ separately defines \OW row and column orders.
\end{definition}

\begin{proposition}[Scale invariance]
For any scalar $\gamma>0$, replacing $\B$ by $\gamma\B$ leaves $\bar{\B}$, the \OW operator, and every \TW operator with the same $\alpha$ unchanged.
\end{proposition}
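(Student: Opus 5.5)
The whole argument reduces to one identity, $\overline{\gamma\B}=\bar{\B}$. Every operator named in the statement is constructed from $\bar{\B}$ alone, together with $\alpha$ in the \TW case, so once the normalized matrix is shown to be unchanged, each operator is unchanged because it is built from identical inputs.

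\emph{Step 1: the normalized matrix.} Fix $\gamma>0$. Since $\gamma>0$, scaling preserves the order of the entries, so $s(\gamma\B)=\max_{i,j}\gamma B_{ij}=\gamma\, s(\B)>0$. Hence $\overline{\gamma\B}=\gamma\B/(\gamma s(\B))=\bar{\B}$.

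Before this, I will check that the active matrix is also unchanged. A row or column marginal of $\gamma\B$ is $\gamma$ times the corresponding marginal of $\B$, so it vanishes exactly when the original does. The same rows and columns are therefore deleted. Likewise, the support graph is unchanged, since $\gamma B_{ij}>0$ exactly when $B_{ij}>0$, so connectivity is preserved as well.

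\emph{Step 2: \OW.} The bipartite lift $\M$ in \eqref{eq:lift} depends only on $\bar{\B}$. Its degree matrix $\operatorname{diag}(\M\one)$ and its Laplacian $\bm L_{\mathrm{ow}}$ are functions of $\M$ alone, so all three are identical for $\B$ and $\gamma\B$.

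\emph{Step 3: \TW.} The \TW adjacency is $\Atw=\M^2+\alpha\M$, possibly with the diagonal of $\M^2$ removed, as in the paper's definition. The step I expect to need care is confirming that this operator is formed from $\bar{\B}$ and $\alpha$ only, with no further dependence on the raw scale of $\B$ (for example, no extra normalization by raw totals). Once that is checked, the blocks $\bar{\B}\bar{\B}^{\top}$, $\bar{\B}^{\top}\bar{\B}$ and $\alpha\bar{\B}$ coincide for $\B$ and $\gamma\B$, and hence so do $\Atw$ and $\Ltw$.

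It is worth noting why the normalization matters here. Without it, the quadratic and linear terms would scale as $\gamma^2$ and $\gamma$, so the effective weight on the direct term would become $\alpha/\gamma$. The invariance therefore genuinely depends on \eqref{eq:scale}.

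Since the operators are identical, their Fiedler vectors, and therefore the induced row and column orders, also coincide.
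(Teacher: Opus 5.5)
Your proof is correct and follows the paper's argument: everything reduces to $s(\gamma\B)=\gamma s(\B)$, so $\overline{\gamma\B}=\bar{\B}$, and then the \OW and \TW operators are built from identical inputs. Your extra checks on the active matrix and the support graph are harmless additions. The only correction is that the paper does not delete the diagonal of $\M^2$; it keeps those self loops and notes that they cancel in the Laplacian, which does not affect your argument.
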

\begin{proof}
The maximum entry scales as $s(\gamma\B)=\gamma s(\B)$. Hence $(\gamma\B)/s(\gamma\B)=\B/s(\B)$.
\end{proof}

\subsection{Two-Walk Spectral Seriation}
A length-two walk in the bipartite lift returns to its starting partition. Squaring Eq.~\eqref{eq:lift} gives
\begin{equation}
\M^2=\begin{bmatrix}
\bar{\B}\bar{\B}^{\top} & \bm 0\\
\bm 0 & \bar{\B}^{\top}\bar{\B}
\end{bmatrix}.
\label{eq:m2}
\end{equation}
The first diagonal block measures row similarity through shared columns. The second measures column similarity through shared rows. Since $\M^2$ alone disconnects the two partitions, direct edges are retained.

\begin{definition}[Two-Walk spectral seriation]
For $\alpha>0$, define
\begin{equation}
\Atw=\M^2+\alpha\M
=\begin{bmatrix}
\bar{\B}\bar{\B}^{\top} & \alpha\bar{\B}\\
\alpha\bar{\B}^{\top} & \bar{\B}^{\top}\bar{\B}
\end{bmatrix}.
\label{eq:tw}
\end{equation}
Let $\Ltw$ be its graph Laplacian. Split a Fiedler vector of $\Ltw$ into row coordinates $\bm x$ and column coordinates $\bm y$. Sorting the two coordinate vectors separately defines the \TW seriation.
\end{definition}

\begin{proposition}[Connectivity]
\label{prop:connectivity}
If the bipartite support graph of the active matrix is connected and $\alpha>0$, then the graph with adjacency $\Atw$ is connected. Consequently, the nullspace of $\Ltw$ is $\operatorname{span}\{\one\}$, and its Fiedler eigenvalue is the smallest positive eigenvalue.
\end{proposition}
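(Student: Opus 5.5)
The plan is to compare supports. Since $\alpha>0$ and $\bar{\B}\ge 0$ entrywise, the off-diagonal blocks $\alpha\bar{\B}$ and $\alpha\bar{\B}^{\top}$ of $\Atw$ in Eq.~\eqref{eq:tw} are positive exactly where $\bar{\B}$ is positive. The diagonal blocks $\bar{\B}\bar{\B}^{\top}$ and $\bar{\B}^{\top}\bar{\B}$ are entrywise nonnegative, being sums of products of nonnegative numbers. So every entry of $\Atw$ is nonnegative, and no cancellation between the two terms $\M^2$ and $\alpha\M$ can remove an edge. Hence the edge set of the graph of $\Atw$ contains every edge of the bipartite support graph of $\bar{\B}$. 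That support graph coincides with the support graph of the active matrix, because $\bar{\B}$ is a positive multiple of $\B$.

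A graph containing a connected spanning subgraph on the same vertex set is connected. The vertex set here is the $m+n$ active rows and columns. The support graph is assumed connected, so the graph of $\Atw$ is connected. The diagonal entries of $\bar{\B}\bar{\B}^{\top}$ create self-loops. I will note that self-loops cancel in $\bm D-\Atw$: the diagonal entry of $\Atw$ appears in both the degree and the adjacency. They therefore affect neither the Laplacian quadratic form nor connectivity.

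For the spectral consequence I will use the standard quadratic form identity
\[
\bm z^{\top}\Ltw\bm z=\tfrac12\sum_{u,v}(\Atw)_{uv}(z_u-z_v)^2\ \ge 0 .
\]
This shows that $\Ltw$ is positive semidefinite and that $\Ltw\one=\bm 0$. If $\Ltw\bm z=\bm 0$, then every term vanishes, so $z_u=z_v$ whenever $(\Atw)_{uv}>0$. Propagating this equality along paths of the connected graph makes $\bm z$ constant, so the nullspace is $\operatorname{span}\{\one\}$. Thus $0$ is a simple eigenvalue. Every other eigenvector can be taken orthogonal to $\one$, and its eigenvalue is positive. The minimizer in Eq.~\eqref{eq:rayleigh} over $\bm z\perp\one$ is therefore an eigenvector for the smallest positive eigenvalue, which is the Fiedler eigenvalue.

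None of these steps is a serious obstacle. The only point needing care is the first step, where one must check that the entrywise nonnegativity of $\bar{\B}$ rules out cancellations.
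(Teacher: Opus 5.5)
Your proposal is correct and follows the paper's proof. The cross blocks of $\Atw$ are exactly $\alpha\bar{\B}$, so the connected support graph of $\M$ is a spanning subgraph, and the standard Laplacian nullspace fact finishes the argument; you spell that fact out via the quadratic form, where the paper simply cites it.

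One minor remark: cancellation between $\M^2$ and $\alpha\M$ can never occur, because $\M^2$ is block diagonal and $\alpha\M$ is block off-diagonal, so their supports are disjoint. The entrywise nonnegativity of the Gram blocks is what you actually need for positive semidefiniteness and for the step where every term of the quadratic form vanishes.
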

\begin{proof}
Every positive edge of the bipartite lift $\M$ remains a positive edge of $\Atw$ with weight multiplied by $\alpha$. Hence the graph of $\Atw$ contains the connected support graph of $\M$ as a spanning subgraph. The Laplacian of a connected undirected graph has a one dimensional nullspace spanned by $\one$.
\end{proof}

The diagonal entries of the two Gram matrices are self loops. They cancel in the combinatorial Laplacian and therefore do not alter the energy below.

\begin{theorem}[Two-Walk energy decomposition]
\label{thm:energy}
For $\bm z=[\bm x^{\top},\bm y^{\top}]^{\top}$,
\begin{equation}
\begin{aligned}
\bm z^{\top}\Ltw\bm z
={}&\frac{1}{2}\sum_{i,k}(\bar{\B}\bar{\B}^{\top})_{ik}(x_i-x_k)^2 \\
&+\frac{1}{2}\sum_{j,\ell}(\bar{\B}^{\top}\bar{\B})_{j\ell}(y_j-y_{\ell})^2 \\
&+\alpha\sum_{i,j}\bar B_{ij}(x_i-y_j)^2.
\end{aligned}
\label{eq:energy}
\end{equation}
\end{theorem}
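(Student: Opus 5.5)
The plan is to reduce the statement to the standard quadratic-form identity for a combinatorial Laplacian and then split the edge sum according to the block structure of $\Atw$ in Eq.~\eqref{eq:tw}. For any symmetric nonnegative $\A$ with $\bm D=\operatorname{diag}(\A\one)$, expanding $\bm w^{\top}(\bm D-\A)\bm w=\sum_u D_{uu}w_u^2-\sum_{u,v}A_{uv}w_uw_v$ and symmetrizing $\sum_u D_{uu}w_u^2=\sum_{u,v}A_{uv}w_u^2=\tfrac12\sum_{u,v}A_{uv}(w_u^2+w_v^2)$ gives
\begin{equation*}
\bm w^{\top}(\bm D-\A)\bm w=\frac12\sum_{u,v}A_{uv}(w_u-w_v)^2 .
\end{equation*}
Diagonal entries $A_{uu}$ contribute $(w_u-w_u)^2=0$. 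This is exactly why the self loops from the Gram diagonals, noted just before the theorem, can be ignored.

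Next I would apply this identity with $\A=\Atw$ and $\bm w=\bm z=[\bm x^{\top},\bm y^{\top}]^{\top}$. I would partition the double sum over vertex pairs $(u,v)$ into four groups: row--row, column--column, row--column, and column--row.

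The row--row block has weights $(\bar{\B}\bar{\B}^{\top})_{ik}$ and contributes $\tfrac12\sum_{i,k}(\bar{\B}\bar{\B}^{\top})_{ik}(x_i-x_k)^2$. The column--column block similarly gives the $\bar{\B}^{\top}\bar{\B}$ term in $\bm y$.

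The two off-diagonal blocks are $\alpha\bar{\B}$ and $\alpha\bar{\B}^{\top}$. Since $(\bar{\B}^{\top})_{ji}=\bar B_{ij}$ and $(y_j-x_i)^2=(x_i-y_j)^2$, they contribute equal sums. Each equals $\tfrac12\alpha\sum_{i,j}\bar B_{ij}(x_i-y_j)^2$, and together they give the factor $\alpha$ with no $\tfrac12$.

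The only delicate step is bookkeeping rather than a real obstacle. Two points need care. First, the $\tfrac12$ must be handled correctly across the symmetric off-diagonal blocks, which is where a factor of two is easy to lose. Second, the degree matrix of $\Atw$ includes the Gram diagonals, so one should check that these diagonal contributions cancel between $\bm D$ and $\A$. I would address the second point by using the symmetrized identity above, which makes that cancellation automatic.

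Alternatively, I could verify the result by linearity. Writing $\Ltw=\bm L(\M^2)+\alpha\,\bm L(\M)$, where $\bm L(\cdot)$ denotes the Laplacian map, is valid because $\bm L(\cdot)$ is linear in the adjacency. The identity then applies separately to the block-diagonal $\M^2$, yielding the two smoothness terms, and to $\M$, yielding the alignment term.
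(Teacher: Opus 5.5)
Your proposal is correct and follows essentially the same route as the paper: both derive the identity $\bm z^{\top}(\bm D-\A)\bm z=\frac12\sum_{p,q}A_{pq}(z_p-z_q)^2$ by expansion, then split the sum over the four blocks of $\Atw$, with the two symmetric cross blocks combining to give the factor $\alpha$. Your remarks on the cancellation of the Gram-diagonal self loops and on the factor of two in the cross blocks spell out bookkeeping that the paper leaves implicit.
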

\begin{proof}
For a symmetric adjacency $\A$, the identity $\bm z^{\top}(\bm D-\A)\bm z=\frac12\sum_{p,q}A_{pq}(z_p-z_q)^2$ follows by expansion. Applying it to the four blocks of Eq.~\eqref{eq:tw} gives the two same-side sums. The two symmetric cross blocks together give the final sum.
\end{proof}

The first term makes rows with similar column neighborhoods smooth in the spectral coordinate. The second does the same for columns. The third aligns directly related rows and columns. Thus $\alpha$ controls the relative emphasis on direct alignment and shared-neighbor smoothness.

\begin{proposition}[Relation to One-Walk]
\label{prop:limit}
Let $\Lap(\M)$ and $\Lap(\M^2)$ denote the Laplacians of $\M$ and $\M^2$. Then
\begin{equation}
\Ltw=\Lap(\M^2)+\alpha\Lap(\M),
\qquad
\alpha^{-1}\Ltw\longrightarrow\Lap(\M)
\end{equation}
as $\alpha\to\infty$. If the Fiedler eigenvalue of $\Lap(\M)$ is simple, the normalized \TW Fiedler vector converges to the \OW Fiedler vector up to sign. If its row and column coordinates are distinct, the induced permutations also agree for sufficiently large $\alpha$.
\end{proposition}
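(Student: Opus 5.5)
The plan has three steps: prove the decomposition, pass to the limit in the normalized operator, and then transfer the convergence to eigenvectors and orderings.

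\textbf{The decomposition.} The Laplacian map $\A\mapsto\operatorname{diag}(\A\one)-\A$ is linear in $\A$. Applying it to $\Atw=\M^2+\alpha\M$ from Eq.~\eqref{eq:tw} gives $\Ltw=\Lap(\M^2)+\alpha\Lap(\M)$ directly. The self loops on the diagonal of $\M^2$ cancel in $\Lap(\M^2)$, as already noted before Theorem~\ref{thm:energy}, so no correction term arises. Dividing by $\alpha$ yields $\alpha^{-1}\Ltw=\Lap(\M)+\alpha^{-1}\Lap(\M^2)$. The matrix $\Lap(\M^2)$ is fixed, since it does not depend on $\alpha$ and $\bar{\B}$ is fixed by Eq.~\eqref{eq:scale}. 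The perturbation therefore has operator norm $\alpha^{-1}\|\Lap(\M^2)\|_2\to0$, which proves convergence in any matrix norm.

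\textbf{Convergence of the Fiedler vector.} Scaling by $\alpha^{-1}>0$ does not change eigenvectors. A Fiedler vector of $\Ltw$ is therefore a Fiedler vector of $\alpha^{-1}\Ltw$. All the Laplacians involved share the eigenvector $\one$ with eigenvalue $0$. By Proposition~\ref{prop:connectivity} applied to $\M$ and to $\Atw$, this null direction is simple, so I restrict to the invariant subspace $\one^{\perp}$. On $\one^{\perp}$, $\Lap(\M)$ has smallest eigenvalue $\lambda_2>0$. Assume it is simple and let $\delta=\lambda_3-\lambda_2>0$. Weyl's inequality shows that for $\alpha^{-1}\|\Lap(\M^2)\|_2<\delta/2$ the smallest eigenvalue of $\alpha^{-1}\Ltw$ on $\one^{\perp}$ stays simple and isolated. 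The Davis--Kahan $\sin\Theta$ theorem then gives
\[
\sin\angle(\bm f_\alpha,\bm f)\le \frac{2\alpha^{-1}\|\Lap(\M^2)\|_2}{\delta}.
\]
Choosing the sign of $\bm f_\alpha$ so that $\bm f_\alpha^{\top}\bm f\ge0$ gives $\|\bm f_\alpha-\bm f\|_2=O(\alpha^{-1})$. This is the convergence up to sign.

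\textbf{Agreement of the permutations.} Write $\bm f=[\bm x^{\top},\bm y^{\top}]^{\top}$. If the entries of $\bm x$ are pairwise distinct, let $g_x=\min_{i\ne k}|x_i-x_k|>0$, and define $g_y$ from $\bm y$ in the same way. Once $\|\bm f_\alpha-\bm f\|_\infty<\tfrac12\min(g_x,g_y)$, every pairwise comparison among row coordinates and among column coordinates has the same outcome for $\bm f_\alpha$ as for $\bm f$. The sorted orders then coincide, up to the global reversal induced by the sign choice. This agrees with the statement's ``up to sign.''

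\textbf{Main obstacle.} The delicate step is the eigenvector perturbation. It needs the simplicity hypothesis and care with the common null vector $\one$, which is why I restrict to $\one^{\perp}$ before invoking Davis--Kahan. The sign ambiguity also has to be tracked so that it carries through consistently to the permutations.
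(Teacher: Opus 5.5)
Your proposal is correct and follows the same route as the paper's proof: the Laplacian is linear in the adjacency, $\alpha^{-1}\Ltw$ is an $O(\alpha^{-1})$ operator-norm perturbation of $\Lap(\M)$, eigenvector perturbation applies for a simple eigenvalue, and strict orderings survive small perturbations. You make the paper's appeal to ``standard perturbation theory'' explicit by restricting to $\one^{\perp}$ and applying Weyl and Davis--Kahan with the gap $\lambda_3-\lambda_2$, and your ``up to global reversal'' qualification on the permutations is the right reading given the sign ambiguity.
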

\begin{proof}
Laplacian construction is linear in the adjacency, which gives the first identity. Dividing by $\alpha$ yields $\Lap(\M)+\alpha^{-1}\Lap(\M^2)$. The second term converges to zero in operator norm. Standard perturbation theory for a simple eigenvalue gives convergence of the eigenvector up to sign. Distinct coordinates preserve their sorted order under sufficiently small perturbations.
\end{proof}

\subsection{Exact Matrix Vector Product}
\label{sec:implementation}
Explicitly forming $\bar{\B}\bar{\B}^{\top}$ and $\bar{\B}^{\top}\bar{\B}$ can destroy sparsity. The following identity permits a sparse eigensolver without constructing either matrix.

\begin{lemma}[Matrix-free Two-Walk product]
\label{lem:matvec}
For $\bm z=[\bm x^{\top},\bm y^{\top}]^{\top}$, define $\bm u=\bar{\B}^{\top}\bm x$ and $\bm v=\bar{\B}\bm y$. Then
\begin{equation}
\Atw\bm z=
\begin{bmatrix}
\bar{\B}\bm u+\alpha\bm v\\
\bar{\B}^{\top}\bm v+\alpha\bm u
\end{bmatrix}.
\label{eq:matvec}
\end{equation}
If $\bm r=\bar{\B}\one$ and $\bm c=\bar{\B}^{\top}\one$, the degree vector is
\begin{equation}
\bm d_{\alpha}=
\begin{bmatrix}
\bar{\B}\bm c+\alpha\bm r\\
\bar{\B}^{\top}\bm r+\alpha\bm c
\end{bmatrix},
\qquad
\Ltw\bm z=\bm d_{\alpha}\odot\bm z-\Atw\bm z.
\label{eq:lmatvec}
\end{equation}
\end{lemma}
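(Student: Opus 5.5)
The plan is to expand everything blockwise using the block form of $\Atw$ in Eq.~\eqref{eq:tw}, together with associativity of matrix products. First, write $\Atw\bm z$ as the product of the $2\times 2$ block matrix with the stacked vector $[\bm x^{\top},\bm y^{\top}]^{\top}$. The top block is $\bar{\B}\bar{\B}^{\top}\bm x+\alpha\bar{\B}\bm y$. Regrouping $\bar{\B}\bar{\B}^{\top}\bm x=\bar{\B}(\bar{\B}^{\top}\bm x)=\bar{\B}\bm u$ and substituting $\bm v=\bar{\B}\bm y$ gives $\bar{\B}\bm u+\alpha\bm v$. The bottom block is $\alpha\bar{\B}^{\top}\bm x+\bar{\B}^{\top}\bar{\B}\bm y$. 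Here $\bar{\B}^{\top}\bar{\B}\bm y=\bar{\B}^{\top}\bm v$ and $\bar{\B}^{\top}\bm x=\bm u$, so the bottom block is $\bar{\B}^{\top}\bm v+\alpha\bm u$. This proves Eq.~\eqref{eq:matvec}. The point of this step is that only the four sparse products with $\bar{\B}$ and $\bar{\B}^{\top}$ appear, and no Gram matrix is formed.

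Second, I obtain the degree vector by specializing the first identity to $\bm z=\one$. Since $\bm D=\operatorname{diag}(\Atw\one)$ by the definition of the combinatorial Laplacian, we have $\bm d_\alpha=\Atw\one$. Taking $\bm x=\one_m$ and $\bm y=\one_n$ gives $\bm u=\bar{\B}^{\top}\one=\bm c$ and $\bm v=\bar{\B}\one=\bm r$. Substituting into Eq.~\eqref{eq:matvec} gives exactly the stated $\bm d_\alpha$.

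Third, the Laplacian identity follows from $\Ltw=\operatorname{diag}(\bm d_\alpha)-\Atw$ and the fact that $\operatorname{diag}(\bm d)\bm z=\bm d\odot\bm z$.

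The only step needing care is the treatment of self-loops. The diagonal entries of $\bar{\B}\bar{\B}^{\top}$ and $\bar{\B}^{\top}\bar{\B}$ are counted in the degree $\bm d_\alpha$ and also appear in $\Atw\bm z$. I will check that this is consistent: both conventions include them, and they cancel in $\bm d_\alpha\odot\bm z-\Atw\bm z$ exactly as they cancel in $\Ltw$, as remarked before Theorem~\ref{thm:energy}. So the formula reproduces the same operator, and no exclusion of the diagonal is needed. Beyond this bookkeeping, the argument is purely algebraic, so I expect no real obstacle.
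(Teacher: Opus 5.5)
Your proposal is correct and matches the paper's proof: multiply the block form in Eq.~\eqref{eq:tw} by $[\bm x^{\top},\bm y^{\top}]^{\top}$, associate the products as $\bar{\B}(\bar{\B}^{\top}\bm x)$ and $\bar{\B}^{\top}(\bar{\B}\bm y)$, and set $\bm z=\one$ to obtain the degree vector. Your self-loop remark is a valid consistency check, since the $q=p$ term of $\sum_q (\Atw)_{pq}(z_p-z_q)$ vanishes, but the lemma does not require it.
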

\begin{proof}
Multiply the block matrix in Eq.~\eqref{eq:tw} by $[\bm x^{\top},\bm y^{\top}]^{\top}$ and associate the products as $\bar{\B}(\bar{\B}^{\top}\bm x)$ and $\bar{\B}^{\top}(\bar{\B}\bm y)$. Setting $\bm z=\one$ gives the degree vector.
\end{proof}

After precomputing the degree vector, one application of $\Ltw$ uses four sparse products with $\bar{\B}$ or $\bar{\B}^{\top}$. Its arithmetic cost is $O(\operatorname{nnz}(\B))$ and its storage is $O(\operatorname{nnz}(\B)+m+n)$ apart from eigensolver workspace. A direct implementation forms the Gram matrices explicitly; Eq.~\eqref{eq:lmatvec} specifies an exact sparse replacement rather than an approximation.

A Fiedler vector is defined only up to sign, so reversing both axis orders is equivalent. For deterministic output, we orient the vector so that its first coordinate of maximum absolute value is positive, then apply a stable sort. If the active support has several connected components, we order each component independently, arrange components by decreasing total mass, and append zero-marginal rows and columns in their original order. A repeated Fiedler eigenvalue can still make the order nonunique, which is a limitation of spectral seriation rather than a numerical bug.

\subsection{Mosaic Layout}
\label{sec:mosaic}
Let $\A$ be the reordered nonnegative matrix. Let
\begin{equation}
r_i=\sum_j A_{ij},\qquad c_j=\sum_i A_{ij},\qquad T=\sum_{i,j}A_{ij}>0.
\end{equation}
Assume positive active marginals, and define normalized marginals $R_i=r_i/T$ and $C_j=c_j/T$.

\begin{definition}[Mosaic layout]
\label{def:mosaic}
Set cumulative boundaries
\begin{equation}
Y_0=0,\quad Y_i=\sum_{k=1}^{i}R_k,
\qquad
X_0=0,\quad X_j=\sum_{\ell=1}^{j}C_{\ell}.
\label{eq:mosaic-edges}
\end{equation}
In the \Mosaic layout, cell $(i,j)$ occupies
\begin{equation}
\mathcal R_{ij}=[X_{j-1},X_j]\times[Y_{i-1},Y_i],
\label{eq:mosaic-cell}
\end{equation}
and its fill color is a monotone mapping $g(A_{ij})$ of the observed entry.
\end{definition}

\begin{proposition}[Tiling and area semantics]
\label{prop:mosaic}
The rectangles in Definition~\ref{def:mosaic} cover the unit square and have disjoint interiors. Every cell in row $i$ has height $R_i$, every cell in column $j$ has width $C_j$, and
\begin{equation}
\operatorname{area}(\mathcal R_{ij})=R_iC_j=\frac{r_ic_j}{T^2}.
\label{eq:mosaic-area}
\end{equation}
If $\A$ is a contingency table, its expected count under row and column independence is $E_{ij}=r_ic_j/T$, so the cell area equals $E_{ij}/T$.
\end{proposition}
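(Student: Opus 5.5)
The plan is to reduce everything to two one-dimensional partitions of $[0,1]$ and then take their product. First I would check that the boundaries in Eq.~\eqref{eq:mosaic-edges} are well ordered. Since the active marginals are positive, $R_i>0$ and $C_j>0$. Hence $0=Y_0<Y_1<\dots<Y_m$ and $0=X_0<X_1<\dots<X_n$ are strictly increasing. Summing the marginals gives $\sum_i r_i=\sum_j c_j=T$, so
\[
Y_m=\sum_i R_i=\frac{1}{T}\sum_i r_i=1,
\qquad
X_n=\sum_j C_j=1 .
\]
Thus the intervals $I_i=[Y_{i-1},Y_i]$ cover $[0,1]$, and distinct intervals meet in at most one endpoint. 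Their interiors $(Y_{i-1},Y_i)$ are pairwise disjoint, because for $i<i'$ we have $Y_i\le Y_{i'-1}$. The same holds for $J_j=[X_{j-1},X_j]$.

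Next I would pass to the product. By Eq.~\eqref{eq:mosaic-cell}, $\mathcal R_{ij}=J_j\times I_i$. For coverage, take any $(x,y)\in[0,1]^2$. Then $x$ lies in some $J_j$ and $y$ lies in some $I_i$, so $(x,y)\in\mathcal R_{ij}$. For disjoint interiors, note that the interior of a product of closed intervals is the product of the open intervals. If $(i,j)\neq(i',j')$, then either $i\neq i'$ or $j\neq j'$. The corresponding open factors are disjoint, so the open rectangles are disjoint.

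The measurements are then immediate. The height of $\mathcal R_{ij}$ is $Y_i-Y_{i-1}=R_i$, which is independent of $j$. Its width is $X_j-X_{j-1}=C_j$, which is independent of $i$. So its area is
\[
\operatorname{area}(\mathcal R_{ij})=R_iC_j=\frac{r_ic_j}{T^2},
\]
which is Eq.~\eqref{eq:mosaic-area}.

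For the contingency-table clause, I would recall that under independence the expected count is $E_{ij}=T\cdot(r_i/T)(c_j/T)=r_ic_j/T$. Dividing by $T$ gives $E_{ij}/T=r_ic_j/T^2$, which equals the area.

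No step is a real obstacle. The one point needing care is the positivity assumption on the active marginals. It guarantees nondegenerate intervals, so every cell has positive area and the interiors are well defined. If zero marginals were allowed, some rectangles would collapse to segments. Coverage and the area formula would still hold, but I would state the result for active marginals, as the proposition does.
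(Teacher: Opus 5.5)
Your proof is correct and follows the same route as the paper: you partition each axis into adjacent intervals of lengths $R_i$ and $C_j$ summing to one, take Cartesian products, and read off the height, width, area, and the independence identity $E_{ij}/T=r_ic_j/T^2$. You also supply details that the paper's short proof leaves implicit: strict monotonicity from positive marginals, coverage of $[0,1]$, and disjointness of the open product rectangles.
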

\begin{proof}
The normalized marginals satisfy $\sum_iR_i=\sum_jC_j=1$. Their cumulative sums partition each axis into adjacent intervals. Taking all Cartesian products therefore partitions the unit square. The height, width, and area follow directly from the interval lengths. The independence statement follows from the standard expected count formula.
\end{proof}

This proposition separates the visual channels. Geometry encodes the two marginals and therefore the independence baseline. Color encodes the observed cell value. Cell area does not encode $A_{ij}$ itself. The mathematical layout has no interior gaps. A white region in a rendered figure comes from the color mapping of a zero or small entry, or from optional divider lines. Rare rows or columns can become visually narrow, so \Mosaic is intended for overview and mass context rather than precise cell value comparison.

\begin{definition}[MHeatmap workflow]
\MHeatmap first applies \TW seriation to obtain $(\pi_R,\pi_C)$. It then renders the reordered matrix either as an equal area heatmap or with the \Mosaic layout in Definition~\ref{def:mosaic}.
\end{definition}

\section{Evaluation}
\label{sec:evaluation}

\subsection{Evaluation Questions and Scope}
We evaluate the two stages of \MHeatmap with evidence matched to their claims. The ordering stage is evaluated quantitatively because it produces explicit row and column permutations. The \Mosaic stage is evaluated through controlled visual comparisons because it changes the representation of the same reordered matrix rather than the underlying order.

The evaluation addresses four questions. First, does \TW recover rectangular block and band structure more consistently than representative ordering baselines? Second, does it improve over \OW, which uses the same spectral pipeline but excludes the length-two terms? Third, on which matrix semantics do shared-neighbor affinities help, and where do they provide little benefit? Fourth, when the row and column order is held fixed, what additional information is made visible by \Mosaic geometry?

\subsection{Controlled Synthetic Benchmark}
The synthetic component contains five structural families at three matrix sizes, with 20 independent random seeds for every family and size combination. This gives
\[
5\ \text{families}\times 3\ \text{sizes}\times 20\ \text{seeds}=300
\]
matrices. The sizes are $100\times90$, $200\times180$, and $400\times360$. All families are generated from a common rectangular block template, after which the rows and columns are randomly permuted before any method is applied. The algorithms receive only the permuted weighted matrix and do not receive block labels.

Family A is a clean one-to-one block control. Each principal row group corresponds to one principal column group, so a direct row to column signal is already strong. Family B adds paired subgroup overlap, which weakens one-to-one separation while preserving shared-neighbor structure. Family C introduces shared super-prototypes, so several subgroups inherit related column profiles and form nested organization. Family D adds noise to the shared-prototype setting. Family E introduces cross-block leakage, so relation mass can spill outside the intended diagonal blocks. The sequence therefore separates an easy direct-link regime from regimes in which shared neighborhoods, noise, and leakage become increasingly important.

\subsection{Real Matrix Benchmark}
The real component contains seven nonnegative rectangular matrices with different semantics and aspect ratios. Two official crosswalks, SIC to NAICS and CIP to SOC, test many-to-many mappings between classification systems~\cite{census_naics,nces_cipsoc}. An ACS occupation by industry contingency table tests a small aggregate relation matrix~\cite{census_acs}. LODES home county by work county flows test a matrix whose organization is partly geographic~\cite{census_lodes}. A 20 Newsgroups term by document matrix tests text relations~\cite{uci20newsgroups}. An MBTA route by station matrix tests route incidence and local network structure~\cite{mbta_gtfs}. An OpenAlex author by topic matrix tests specialization across distinct entity types~\cite{priem2022openalex}.

Dataset construction follows fixed external rules rather than a search for submatrices on which \TW performs well. Complete official crosswalks are used when available. Geographic, text, transit, and bibliographic subsets are fixed before ordering. Only structurally empty rows and columns are removed. Table~\ref{tab:benchmark} reports the final matrix dimensions.

\subsection{Compared Methods}
We compare seven orderings that represent complementary design choices.

\paragraph{Original (O).}
This baseline keeps the supplied row and column orders. It measures how much structure is visible without algorithmic intervention.

\paragraph{Marginal sorting (M).}
Rows and columns are sorted independently by decreasing row and column sums. This exposes degree or frequency imbalance but does not use relation profiles.

\paragraph{HC with optimal leaf ordering (HC).}
Hierarchical clustering is applied separately to row profiles and column profiles, followed by optimal leaf ordering~\cite{barjoseph2001optimal}. It represents profile-based clustering that does not construct a joint row and column spectral coordinate.

\paragraph{One-Walk (OW).}
\OW applies the same Laplacian and Fiedler ordering pipeline as \TW to the ordinary bipartite lift in Eq.~\eqref{eq:lift}. It excludes $\bar{\B}\bar{\B}^{\top}$ and $\bar{\B}^{\top}\bar{\B}$. This is the central ablation because the difference between OW and TW is exactly the length-two augmentation.

\paragraph{Correspondence analysis SVD (CA).}
Correspondence analysis obtains coupled row and column coordinates from a marginally normalized matrix and orders each axis by its first nontrivial coordinate~\cite{greenacre2017correspondence}. It represents a factorization-based alternative.

\paragraph{Alternating median (MD).}
Alternating median updates row positions from the weighted median positions of adjacent columns and then updates columns from adjacent rows~\cite{cakiroglu2009weightedcrossing}. It represents a local refinement method motivated by weighted bipartite layout.

All methods receive the same nonnegative matrix after removal of structurally empty rows and columns. For spectral methods, reversing the complete coordinate vector gives an equivalent ordering. The reported distance based metrics are invariant when both axes are reversed together.

\subsection{Structure-Recovery Metrics}
A rectangular matrix has different numbers of row and column positions, so raw integer ranks cannot be compared directly. For row permutation $\pi_R$ and column permutation $\pi_C$, we map both axes to the same unit interval:
\begin{equation}
\rho_i=\frac{\pi_R(i)-1}{\max(m-1,1)},
\qquad
\kappa_j=\frac{\pi_C(j)-1}{\max(n-1,1)}.
\label{eq:coords}
\end{equation}
Let $T_B=\sum_{i,j}B_{ij}>0$.

\paragraph{Normalized rectangular 2-SUM (R2S).}
We measure weighted diagonal dispersion by
\begin{equation}
\operatorname{R2S}(\pi_R,\pi_C)
=
\frac{1}{T_B}
\sum_{i,j}B_{ij}(\rho_i-\kappa_j)^2.
\label{eq:r2s}
\end{equation}
R2S lies in $[0,1]$. A lower value means that more relation mass is placed near corresponding normalized positions on the two axes. This is the rectangular form of a weighted quadratic arrangement objective. Table~\ref{tab:benchmark} multiplies R2S by $10^2$ for readability.

\paragraph{Band concentration.}
For a normalized half width $\tau$, we define
\begin{equation}
\operatorname{Band}_{\tau}(\pi_R,\pi_C)
=
\frac{1}{T_B}
\sum_{i,j}B_{ij}
\mathbb I\!\left\{\lvert\rho_i-\kappa_j\rvert\le\tau\right\}.
\label{eq:band}
\end{equation}
We report $\tau=0.10$. Higher values indicate that a larger fraction of mass lies in a narrow diagonal band. R2S penalizes distance continuously, whereas Band@10\% applies a fixed visual tolerance. The two measures can therefore disagree when a method concentrates most mass tightly but leaves a small number of distant entries.

\paragraph{MWB-AUC.}
MWB-AUC summarizes block separability across a sweep of diagonal block cuts. For each cut scale, the benchmark computes the fraction of total mass retained within matched contiguous diagonal blocks. The area under this curve is larger when diagonal blocks remain well separated over a broad range of cut scales. Unlike R2S and Band@10\%, MWB-AUC is also used internally to choose $\alpha$ for adaptive \TW.

\subsection{Parameter Selection and Reporting Protocol}
Adaptive \TW evaluates
\[
\alpha\in\{1,2,4,6,8,12\}
\]
for each input matrix and selects the value with the largest internal MWB-AUC. The selection is unsupervised and does not use synthetic block labels or external task labels. Nevertheless, it gives \TW one data-dependent choice that the fixed baselines do not receive. It also makes MWB-AUC a selection-aligned measure. We retain MWB-AUC in the complete table for transparency, but base the main comparative claim on R2S and Band@10\%, neither of which is used to choose $\alpha$.

For each synthetic family and size, Table~\ref{tab:benchmark} reports the arithmetic mean over 20 seeds. Each real matrix is evaluated once at its full reported size. Boldface marks the best method within one row and metric, and underlining marks the runner-up, including ties. The table reports every family and size rather than only an aggregate so that readers can inspect dependence on matrix scale and motif difficulty.

The \Mosaic evaluation uses four Tabula Sapiens cell type by cluster contingency tables~\cite{tabula_sapiens}. For every tissue, the equal-area \TW panel and the \Mosaic panel use identical row and column permutations and the same matrix values. The only change is the geometry from equal row heights and column widths to the marginal proportions in Definition~\ref{def:mosaic}. This design isolates what can be attributed to \Mosaic. The qualitative analysis is intended to establish interpretability of the encoding, not universal perceptual superiority over equal-area heatmaps or marginal bar charts.

\section{Results}
\label{sec:results}
Table~\ref{tab:benchmark} is the complete quantitative result. The synthetic rows retain all 15 family and size regimes, and the real rows retain all seven datasets. The table also keeps MWB-AUC visible so that the effect of the adaptive selector can be audited rather than hidden.

\begin{table*}[t]
\centering
\scriptsize
\setlength{\tabcolsep}{3pt}
\resizebox{\textwidth}{!}{%
\begin{tabular}{ll rrrrrrr rrrrrrr rrrrrrr}
\toprule
& & \multicolumn{7}{c}{$10^2 \times$ R2S $\downarrow$}
& \multicolumn{7}{c}{Band@10\% $\uparrow$}
& \multicolumn{7}{c}{MWB-AUC $\uparrow$} \\
&
& O & M & HC & OW & CA & MD & TW
& O & M & HC & OW & CA & MD & TW
& O & M & HC & OW & CA & MD & TW \\
\midrule
\multicolumn{23}{l}{\emph{Synthetic component}} \\
\midrule

A: Clean one-to-one
& 400$\times$360
& 16.70 & 13.84 & 10.36 & \underline{0.52} & 0.59 & 0.79 & \textbf{0.43}
& 0.20 & 0.21 & 0.24 & \underline{0.86} & 0.85 & 0.76 & \textbf{0.90}
& 0.44 & 0.47 & 0.53 & \underline{0.92} & \underline{0.92} & 0.89 & \textbf{0.93} \\
& 200$\times$180
& 16.88 & 13.72 & 10.67 & \underline{0.55} & 0.63 & 0.89 & \textbf{0.42}
& 0.20 & 0.23 & 0.26 & \underline{0.87} & 0.86 & 0.78 & \textbf{0.92}
& 0.44 & 0.48 & 0.54 & \underline{0.92} & \underline{0.92} & 0.89 & \textbf{0.94} \\
& 100$\times$90
& 17.09 & 13.25 & 9.14 & \underline{0.60} & 0.79 & 0.96 & \textbf{0.51}
& 0.21 & 0.24 & 0.34 & \underline{0.89} & 0.85 & 0.88 & \textbf{0.92}
& 0.44 & 0.49 & 0.60 & \underline{0.93} & 0.91 & 0.91 & \textbf{0.94} \\

\midrule
B: Paired subgroup overlap
& 400$\times$360
& 16.72 & 14.04 & 8.10 & \underline{1.04} & 1.11 & 1.08 & \textbf{0.88}
& 0.19 & 0.21 & 0.31 & 0.68 & 0.67 & \underline{0.69} & \textbf{0.74}
& 0.44 & 0.47 & 0.60 & \underline{0.87} & \underline{0.87} & \underline{0.87} & \textbf{0.89} \\
& 200$\times$180
& 16.76 & 14.15 & 8.53 & \underline{1.15} & 1.20 & 1.34 & \textbf{0.82}
& 0.20 & 0.22 & 0.27 & 0.67 & 0.70 & \underline{0.72} & \textbf{0.78}
& 0.44 & 0.47 & 0.56 & \underline{0.87} & \underline{0.87} & \underline{0.87} & \textbf{0.90} \\
& 100$\times$90
& 16.90 & 13.68 & 10.39 & \underline{1.03} & 1.32 & 1.93 & \textbf{0.94}
& 0.22 & 0.24 & 0.32 & \underline{0.76} & 0.72 & 0.73 & \textbf{0.79}
& 0.44 & 0.49 & 0.55 & \underline{0.89} & 0.87 & 0.86 & \textbf{0.90} \\

\midrule
C: Shared super-prototype
& 400$\times$360
& 16.92 & 14.07 & 8.71 & \underline{1.13} & 1.35 & 1.78 & \textbf{0.93}
& 0.19 & 0.22 & 0.27 & 0.67 & 0.64 & \underline{0.69} & \textbf{0.73}
& 0.43 & 0.47 & 0.58 & \underline{0.87} & 0.85 & 0.85 & \textbf{0.88} \\
& 200$\times$180
& 16.88 & 13.62 & 9.15 & \underline{1.01} & 1.24 & 2.07 & \textbf{0.86}
& 0.20 & 0.23 & 0.31 & \underline{0.72} & 0.69 & \underline{0.72} & \textbf{0.77}
& 0.44 & 0.49 & 0.58 & \underline{0.88} & 0.87 & 0.85 & \textbf{0.90} \\
& 100$\times$90
& 17.24 & 13.12 & 7.44 & \underline{1.01} & 1.39 & 3.23 & \textbf{0.86}
& 0.21 & 0.25 & 0.37 & \underline{0.77} & 0.72 & 0.73 & \textbf{0.83}
& 0.44 & 0.50 & 0.64 & \underline{0.89} & 0.87 & 0.83 & \textbf{0.90} \\

\midrule
D: Shared prototype with noise
& 400$\times$360
& 16.63 & 15.12 & 9.85 & 1.28 & 1.54 & \underline{1.18} & \textbf{1.00}
& 0.20 & 0.22 & 0.24 & 0.63 & 0.61 & \underline{0.64} & \textbf{0.70}
& 0.44 & 0.47 & 0.52 & 0.85 & 0.84 & \underline{0.86} & \textbf{0.88} \\
& 200$\times$180
& 16.74 & 14.35 & 8.94 & \underline{1.23} & 1.67 & 1.51 & \textbf{1.04}
& 0.20 & 0.23 & 0.29 & \underline{0.69} & 0.62 & 0.65 & \textbf{0.72}
& 0.44 & 0.48 & 0.57 & \underline{0.87} & 0.84 & 0.85 & \textbf{0.88} \\
& 100$\times$90
& 17.07 & 13.51 & 7.70 & \underline{1.14} & 1.65 & 2.67 & \textbf{0.94}
& 0.21 & 0.25 & 0.40 & \underline{0.74} & 0.67 & 0.72 & \textbf{0.79}
& 0.44 & 0.50 & 0.64 & \underline{0.88} & 0.85 & 0.84 & \textbf{0.90} \\

\midrule
E: Cross-block leakage
& 400$\times$360
& 16.65 & 14.70 & 9.46 & \underline{2.09} & \underline{2.09} & 2.65 & \textbf{1.87}
& 0.19 & 0.22 & 0.26 & 0.55 & \underline{0.57} & \textbf{0.60} & 0.56
& 0.44 & 0.47 & 0.56 & \underline{0.81} & \underline{0.81} & \textbf{0.82} & \textbf{0.82} \\
& 200$\times$180
& 16.88 & 13.88 & 7.59 & \underline{1.86} & 2.16 & 3.28 & \textbf{1.65}
& 0.20 & 0.23 & 0.38 & 0.60 & 0.59 & \underline{0.61} & \textbf{0.63}
& 0.44 & 0.49 & 0.63 & \underline{0.83} & 0.82 & 0.80 & \textbf{0.84} \\
& 100$\times$90
& 16.87 & 13.15 & 8.36 & \underline{1.61} & 2.12 & 4.25 & \textbf{1.60}
& 0.22 & 0.25 & 0.33 & \underline{0.67} & 0.64 & \underline{0.67} & \textbf{0.70}
& 0.44 & 0.50 & 0.61 & \underline{0.85} & 0.83 & 0.79 & \textbf{0.86} \\

\midrule
\multicolumn{23}{l}{\emph{Real-world component}} \\
\midrule

SIC $\rightarrow$ NAICS
& 1004$\times$1176
& 2.61 & 10.78 & 2.11 & \underline{0.68} & 2.88 & 0.76 & \textbf{0.18}
& 0.66 & 0.30 & 0.52 & \textbf{0.98} & 0.51 & 0.90 & \underline{0.96}
& 0.82 & 0.56 & 0.78 & \underline{0.94} & 0.73 & 0.92 & \textbf{0.97} \\

CIP $\rightarrow$ SOC
& 2143$\times$868
& 15.90 & 16.04 & 12.18 & \underline{2.52} & 9.77 & 12.37 & \textbf{1.67}
& 0.19 & 0.27 & 0.20 & \underline{0.43} & 0.30 & 0.17 & \textbf{0.61}
& 0.44 & 0.50 & 0.53 & \underline{0.77} & 0.57 & 0.40 & \textbf{0.83} \\

ACS OCCP $\times$ INDP
& 24$\times$10
& 20.67 & 8.87 & 15.94 & 3.39 & \textbf{2.23} & \underline{3.05} & 3.61
& 0.26 & 0.52 & 0.32 & \underline{0.88} & \textbf{0.91} & 0.85 & \underline{0.88}
& 0.52 & 0.74 & 0.53 & \underline{0.92} & \textbf{0.95} & 0.91 & \underline{0.92} \\

LODES Home $\times$ Work County
& 95$\times$95
& 10.98 & 3.98 & 2.78 & \textbf{1.89} & 5.42 & 4.54 & \underline{1.90}
& 0.60 & 0.72 & 0.69 & \underline{0.78} & \textbf{0.79} & 0.77 & \underline{0.78}
& 0.69 & 0.82 & 0.81 & \textbf{0.87} & \underline{0.86} & 0.83 & \textbf{0.87} \\

20 Newsgroups Term $\times$ Document
& 60$\times$48
& 6.98 & 0.99 & 2.75 & \textbf{0.43} & 0.56 & \underline{0.47} & \textbf{0.43}
& 0.27 & 0.92 & 0.07 & \underline{0.94} & 0.93 & \underline{0.94} & \textbf{0.95}
& 0.68 & 0.93 & 0.72 & 0.94 & \textbf{0.97} & \underline{0.95} & \underline{0.95} \\

MBTA GTFS Route $\times$ Station
& 8$\times$125
& 17.96 & 9.39 & 3.33 & \textbf{1.25} & 3.09 & \underline{2.19} & \textbf{1.25}
& 0.02 & 0.01 & 0.09 & 0.10 & \underline{0.14} & \textbf{0.19} & 0.11
& 0.04 & 0.02 & 0.13 & 0.14 & \underline{0.19} & \textbf{0.27} & 0.15 \\

OpenAlex Author $\times$ Topic
& 60$\times$8
& 19.72 & 12.88 & 11.79 & \underline{0.98} & 2.53 & 3.36 & \textbf{0.92}
& 0.39 & 0.56 & 0.51 & \underline{0.99} & 0.88 & 0.98 & \textbf{1.00}
& 0.56 & 0.68 & 0.68 & \underline{0.99} & 0.94 & \underline{0.99} & \textbf{1.00} \\

\bottomrule
\end{tabular}%
}

\caption{Combined rectangular evaluation benchmark. Best entry per row and metric is in \textbf{bold}, runner-up is \underline{underlined}. Synthetic component (top): mean over 20 seeds per family–size regime. Real-world component (bottom): datasets at full size. MWB-AUC is the internal \(\alpha\)-selection metric for TW; R2S and Band@10\% are not used for \(\alpha\) selection.}

\label{tab:benchmark}
\end{table*}

\subsection{Synthetic Structure Recovery}
Across the synthetic component, \TW obtains the lowest R2S in all 15 family and size regimes. It obtains the highest Band@10\% in 14 of 15 regimes and the highest or tied highest MWB-AUC in all 15. The MWB-AUC result is expected to favor \TW because the same score selects $\alpha$. More importantly, the improvement also appears on the two metrics that are not used for selection.

Averaged over the 15 regimes, the reported $10^2\times$R2S decreases from $1.150$ for \OW to $0.983$ for \TW, a relative reduction of approximately $14.5\%$. Mean Band@10\% increases from $0.718$ to $0.765$, an absolute increase of $0.047$. Since \OW and \TW share the same normalization, Laplacian construction, eigensolver concept, coordinate split, and sorting rule, this comparison isolates the contribution of the length-two terms.

\paragraph{Family A, clean one-to-one blocks.}
Direct row to column relations already give \OW a strong signal. \TW still improves R2S at all three sizes, from $0.52$, $0.55$, and $0.60$ to $0.43$, $0.42$, and $0.51$, respectively. Band@10\% also rises in every size. These modest gains are consistent with the method design: shared-neighbor affinities reinforce an already separable pattern but are not essential to recover it.

\paragraph{Families B and C, overlapping and nested structure.}
The benefit is clearer when entities on the same side share relation profiles. In Family B, \TW improves R2S over \OW in all sizes, with the largest change at $200\times180$, from $1.15$ to $0.82$. Band@10\% rises from $0.67$ to $0.78$ in the same regime. Family C shows the same pattern across shared super-prototypes. The results support the mechanism in Theorem~\ref{thm:energy}: the Gram terms add direct smoothness between rows with shared columns and between columns with shared rows.

\paragraph{Family D, shared prototypes with noise.}
Noise weakens individual entries but does not remove the neighborhood pattern. \TW remains best on both independent metrics at all three sizes. For the largest matrix, R2S decreases from $1.28$ for \OW to $1.00$, and Band@10\% increases from $0.63$ to $0.70$. This suggests that aggregating two-step evidence can preserve block organization when single edges are less reliable.

\paragraph{Family E, cross-block leakage.}
This family gives the most restrictive synthetic case. \TW retains the lowest R2S at all sizes, but it is not best on Band@10\% for the $400\times360$ matrix. MD places $0.60$ of the mass within the fixed band, compared with $0.56$ for \TW, while \TW has a substantially lower R2S than MD, $1.87$ versus $2.65$. The disagreement is informative. MD puts slightly more mass inside the chosen band threshold, whereas \TW reduces the overall squared distance of relation mass from the diagonal. No single metric completely describes the visual organization of a reordered matrix.

\subsection{Real Matrix Results}
The strongest improvements occur in relational matrices where same-side entities are naturally compared through shared neighbors.

\paragraph{Classification crosswalks.}
For SIC to NAICS, \TW reduces $10^2\times$R2S from $0.68$ for \OW to $0.18$, a reduction of approximately $73.5\%$. \OW has the slightly higher Band@10\%, $0.98$ versus $0.96$, which indicates that both methods form a narrow band but \TW reduces the remaining long-distance mass more strongly. For CIP to SOC, \TW improves both independent metrics. R2S decreases from $2.52$ to $1.67$, and Band@10\% increases from $0.43$ to $0.61$. Crosswalks are a favorable setting because two source codes may be related precisely when they map to similar target codes, and the reverse relation also holds for target codes.

\paragraph{Text and bibliographic matrices.}
On 20 Newsgroups, \TW ties \OW for the best R2S at $0.43$ and raises Band@10\% from $0.94$ to $0.95$. On OpenAlex, R2S improves from $0.98$ to $0.92$, and Band@10\% reaches $1.00$. The OpenAlex result is visually consistent with Figure~\ref{fig:teaser}, where authors with related topic profiles and topics with related author profiles form coordinated blocks.

\paragraph{Small and spatially organized matrices.}
The ACS matrix is only $24\times10$. CA gives the best R2S and Band@10\%, while \TW is not competitive on R2S. In this regime, a low-dimensional factor coordinate is sufficient and the additional shared-neighbor graph does not add useful separation. LODES is governed partly by geographic adjacency. \OW and \TW are nearly identical on R2S, and CA has the highest Band@10\%. MBTA is the clearest boundary case. \TW ties \OW on R2S, but MD has the best Band@10\% and MWB-AUC. Route and station order is influenced by local transit topology, which is not equivalent to similarity through shared bipartite neighborhoods.

These results support a scoped conclusion. \TW is useful when latent organization is relational and shared-neighbor profiles carry information beyond direct edges. It is not a universal substitute for geometric, spatial, or very low-dimensional ordering criteria.

\subsection{Baseline Behavior Across Regimes}
The non-spectral baselines clarify what type of information is needed for the benchmark. Original and marginal orders remain far from the diagonal on every synthetic family. Marginal sorting can expose large rows and columns, but it cannot recover a correspondence between groups on the two axes. HC reduces R2S substantially relative to these two controls, which confirms that profile similarity is useful, but its independently constructed row and column trees do not match the joint spectral methods on the relational motifs.

CA is most competitive when a small number of factor coordinates already describes the matrix. This is visible in Family A and in the ACS table. MD is most competitive when a local neighborhood update agrees with the target structure, including the large leakage regime and the transit example. Its weaker performance on several nested synthetic regimes also shows that local median updates do not consistently replace the global shared-neighbor signal. The diversity of these outcomes is important because it rules out the simpler explanation that any reasonable reordering would obtain the same result.

\subsection{Consistency Across Matrix Size}
The synthetic matrix dimensions increase from $100\times90$ to $400\times360$, which multiplies the number of cells by sixteen. \TW retains the best R2S in every family at every size. It also retains the best Band@10\% except for the largest leakage matrix. The absolute values do not vary monotonically with size because the generator changes the number and realization of weighted relations with the seed. The stable win pattern is the relevant observation: the advantage is not restricted to one selected matrix dimension or one aspect ratio.

The reported synthetic entries are means across matched random instances. The table does not use a hypothesis test to turn these deterministic ordering outputs into binary significance claims. The per-seed records in the anonymous artifact permit paired analysis because every method is evaluated on the same generated matrix for each seed.

\subsection{What the Three Metrics Reveal}
The complete table shows why three complementary summaries are useful. R2S measures all weighted distances to the diagonal. Band@10\% asks whether mass falls inside one fixed visual tolerance. MWB-AUC rewards persistence of contiguous diagonal blocks across its cut sweep. A method can therefore improve one property without improving all three.

SIC to NAICS is one example. \TW has much lower R2S than \OW, while \OW has slightly higher Band@10\%. Family E at $400\times360$ is another example, where MD has the highest Band@10\% but \TW has the lowest R2S. We therefore avoid reducing the benchmark to one aggregate rank. The primary evidence for the adaptive method is its consistency across R2S and Band@10\%, which are not used for parameter selection.

\subsection{Mosaic Case Analysis}
Figure~\ref{fig:cases} shows four Tabula Sapiens tissues. The transition from the first row to the second changes only the order. Dominant cell type and cluster correspondences move toward the diagonal, making split, merge, and leakage patterns easier to locate. The transition from the second row to the third keeps that \TW order fixed and changes only the geometry.

In the \Mosaic row, a frequent cell type receives more vertical space because its row marginal is larger. A frequent computational cluster receives more horizontal space because its column marginal is larger. A large rectangle therefore indicates a large independence-baseline area $r_ic_j/T^2$, while a dark fill indicates a large observed entry $A_{ij}$. This distinction is important. The layout places a split or merge relation in the context of the total sizes of both participating categories, but rectangle area is not itself the observed count.

Across the four tissues, rare categories contract and high-mass categories expand. This makes it easier to distinguish a visually similar split involving a major cell population from one involving a small population. It also creates a limitation: very small marginals can become difficult to inspect, and unequal geometry weakens exact cell-to-cell comparison. The figure therefore supports the intended overview task but does not establish a general perceptual advantage for every matrix reading task.

\begin{figure*}[t]
\centering
\includegraphics[width=\textwidth]{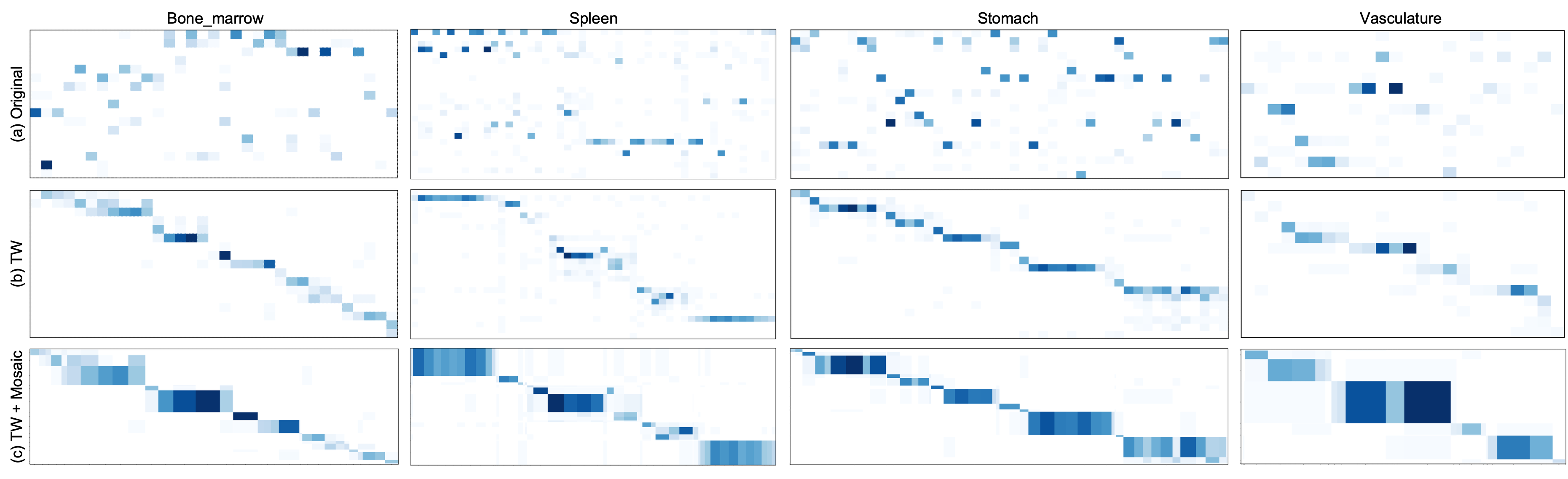}
\caption{Four cell type by cluster contingency matrices from Tabula Sapiens. The first row uses the input order with equal-area cells. The second row applies \TW and keeps equal-area cells. The third row keeps exactly the same \TW row and column orders and applies \Mosaic geometry. The second-to-third-row comparison therefore isolates the layout effect.}
\Description{Four matrices are displayed in three rows. The top row is scattered. The middle row has diagonal blocks with equal-size cells. The bottom row preserves those blocks while assigning unequal row heights and column widths.}
\label{fig:cases}
\end{figure*}

\section{Discussion and Conclusion}
\label{sec:discussion}
\MHeatmap combines a spectral seriation method with a marginal proportional matrix view. The main algorithmic contribution is \TW. Its adjacency retains observed row to column edges and adds the two same side affinities induced by length two walks. Theorem~\ref{thm:energy} shows that the resulting Fiedler coordinate balances smoothness among rows with shared columns, smoothness among columns with shared rows, and alignment of directly related row and column vertices.

\paragraph{Scope of the ordering claim.}
The complete benchmark provides both positive evidence and boundary conditions. \TW is consistently strong when shared profiles define overlapping or nested groups, and it performs well on crosswalk and bibliographic matrices. Its benefit is smaller when direct one step structure is already sufficient, when a matrix is very small, or when spatial and route locality govern the desired order. The two walk terms reward shared bipartite neighborhoods. They do not directly encode geographic distance, temporal order, or route continuity unless those concepts are reflected in the neighborhoods.

The three metrics also define a particular notion of success. They favor arrangements in which relation mass can be organized near matched positions on the normalized axes. This is appropriate for correspondences and block or band motifs, but it is not the only meaningful matrix objective. Hub, checkerboard, and spatial patterns need not become narrowly diagonal. The LODES and MBTA results illustrate this boundary and prevent the benchmark from implying that every matrix should adopt one visual form.

\paragraph{Parameter and computational qualifications.}
Adaptive selection of $\alpha$ gives \TW more flexibility than a fixed method. MWB-AUC is therefore identified as selection aligned, while the principal comparison uses R2S and Band@10\%, which do not choose $\alpha$. The table evaluates the implemented adaptive procedure rather than a parameter free estimator. A fixed global $\alpha$ or a sensitivity curve would be an additional robustness check, but the present conclusion is restricted to the adaptive method and the independent outcome metrics.

Lemma~\ref{lem:matvec} proves that dense Gram matrices are not required and that one matrix vector product is linear in $\operatorname{nnz}(B)$ apart from vector operations. It does not establish wall clock scalability because eigensolver convergence also depends on the eigengap, conditioning, and tolerance. The current experiments evaluate ordering quality, so we make no empirical runtime or memory superiority claim.

\paragraph{Spectral and degree effects.}
A Fiedler order can be nonunique when the relevant eigenvalue is repeated or coordinate values tie. Deterministic component, sign, and tie handling are therefore part of the algorithm and must be recorded with the submitted permutations. In addition, the Gram terms aggregate weighted shared neighbors and can emphasize high mass vertices. Maximum entry normalization removes arbitrary global units but does not remove degree heterogeneity. A degree normalized variant may be preferable when profile shape matters more than shared mass, but it is outside the present method.

\paragraph{Role of Mosaic.}
The \Mosaic stage is defined separately from ordering. Its complete tiling makes row and column marginals visible inside the matrix body, while color retains observed values. It is useful when marginal context affects the interpretation of a split, merge, or block, but it can compress rare categories and does not replace an equal area heatmap for precise comparison. The paired panels hold the order fixed so that the layout effect is not confused with the ordering effect. They establish interpretability, not universal perceptual superiority. Such a claim would require a task specific study against equal area heatmaps, marginal bars, and classical mosaic displays.

In summary, \TW extends bipartite spectral seriation with explicit shared neighbor structure and produces coordinated orders for weighted rectangular matrices. The full table shows when this structure improves recovery and where alternative assumptions remain competitive. \Mosaic provides a precisely defined proportional rendering when row and column mass is part of the analysis.

\bibliographystyle{ACM-Reference-Format}
\bibliography{references}

\appendix

\section{Additional Qualitative Benchmark Cases}
\label{app:additional-cases}
Figure~\ref{fig:appendix-cases} provides two additional full-workflow examples. The top row uses the SIC to NAICS crosswalk, and the bottom row uses the OpenAlex author by topic matrix. Each example repeats the same controlled sequence used in the main paper: input order with equal-area cells, \TW with equal-area cells, and the same \TW order with \Mosaic geometry. These examples are supplementary to the quantitative results in Table~\ref{tab:benchmark}.

\begin{figure*}[t]
\centering
\includegraphics[width=\textwidth]{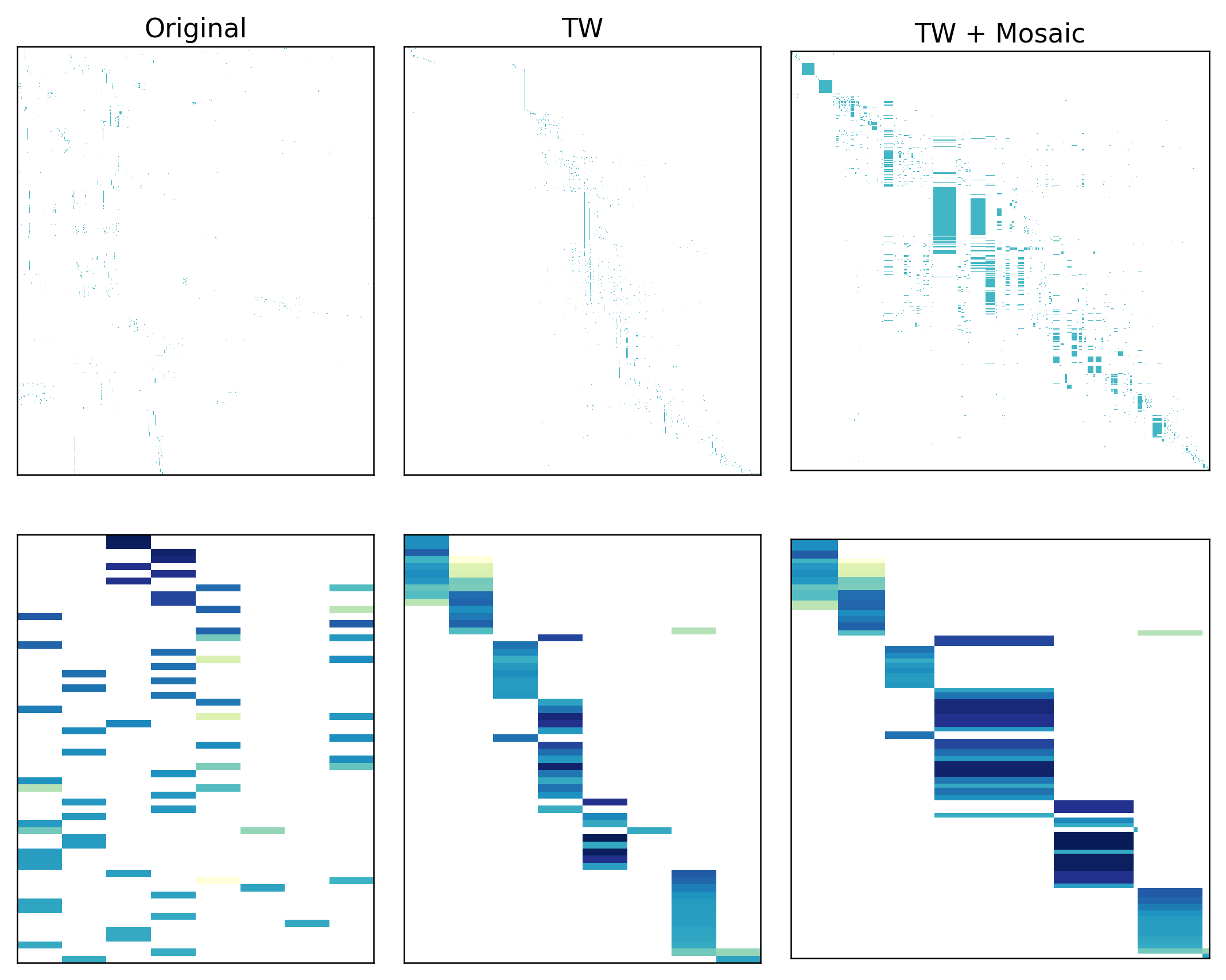}
\caption{Additional qualitative examples. Top: SIC to NAICS. Bottom: OpenAlex author by topic. Columns show the input order, \TW with equal-area cells, and the same \TW order with \Mosaic geometry.}
\Description{Two matrix examples are each shown in three columns. The first column is scattered, the second has diagonal organization after Two-Walk ordering, and the third retains the order while changing row heights and column widths.}
\label{fig:appendix-cases}
\end{figure*}

\section{Reproducibility Materials}
Our code is at \url{https://anonymous.4open.science/r/MHeatmap-D6FD/}. It contains the benchmark implementation, fixed data construction scripts or download instructions, synthetic generators, all baseline wrappers, per-instance permutations and metric records, selected values of $\alpha$, and scripts that regenerate the complete table and every submitted figure.

\end{document}